\newif\ifreviewversion\reviewversionfalse
\newif\ifarxiv\arxivtrue
\let\shortcite\cite
\newtheorem{theorem}{Theorem}
\newtheorem{proposition}[theorem]{Proposition}
\par \addvspace{\medskipamount}}
\title{On the Computational Complexity of \\ Multi-Agent Pathfinding on Directed Graphs}
\author{Bernhard Nebel \\
\\ 
Albert-Ludwigs-Universit\"at \\
Freiburg, Germany \\
nebel@uni-freiburg.de
}
\begin{document}

\maketitle

\begin{abstract}
  The determination of the computational complexity of multi-agent
  pathfinding  on directed graphs has been an open problem for
  many years. For undirected graphs, solvability can be decided in
  polynomial time, as has been shown already in the eighties. Further,
  recently it has been shown that a 
  special case on directed graphs can be decided in polynomial time.  In
  this paper, we show that the problem is 
  NP-hard in the general case. In addition, some upper bounds are
  proven.
\end{abstract}

\section{Introduction}

The multi-agent pathfinding (MAPF) problem is the problem of deciding
the existence of a movement plan for a set of agents moving on a
graph, most often a graph generated from a grid
\cite{ma:koenig:aim-17}. An example is provided in Figure~\ref{F:agents}.
\begin{figure}[htb]
\begin{center}
\resizebox{\ifarxiv%
0.4\columnwidth%
\else%
0.6\columnwidth\fi}{!}{
\begin{tikzpicture}
\draw (0,0) rectangle +(2,2);
\draw(2,0) rectangle +(2,2);
\draw(4,0) rectangle +(2,2);
\draw(2,-2) rectangle +(2,2);
\node at (1,1.1) {\includegraphics[scale=0.075]{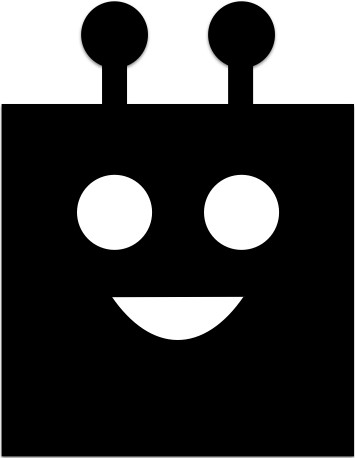}};
\node at (3,-1.1) {\includegraphics[scale=0.075]{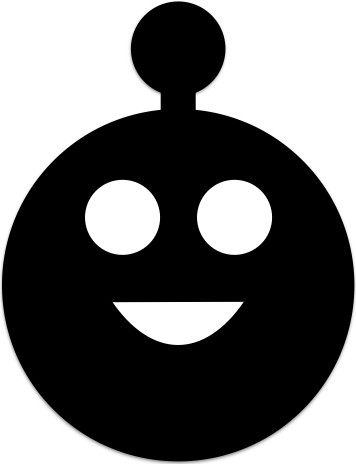}};
\node at (0.3,1.75){\Large $v_1$};
\node at (2.3,1.75){\Large $v_2$};
\node at (4.3,1.75){\Large $v_3$};
\node at (2.3,-0.25){\Large $v_4$};
\node at (3.7,+0.3) [circle,fill=black,minimum size=10pt]{};
\node at (5.7,+0.3) [rectangle,fill=black,minimum size=10pt]{};
\end{tikzpicture}
} \vspace{-1em}
\end{center}
\caption{Multi-agent pathfinding example}
\label{F:agents}
\end{figure}
Here, the circular agent $C$ wants to move to $v_2$ and the square
agent $S$ wants to move to $v_3$. Both want to reach their destination
and then stay there. So, $S$ could move to $v_2$ and then to
$v_3$. After that $C$ could move to its destination $v_2$. So, in
this case, a movement plan exists. Note that for this graph, regardless
of how we place the agents and the destinations, there is always a
movement plan, provided the destinations are on different grid fields. When
removing $v_4$, however, there are situations for which no movement
plan is possible.

Kornhauser et
al. \shortcite{kornhauser:et:al:focs-84} have shown in the eighties
already that deciding solvability is a polynomial-time problem. Later
on, variations of the problem have been studied, such as using
parallel movements and considering optimal movement plans
\cite{surynek:aaai-10,yu:lavalle:aaai-13,ma:et:al:aaai-16,felner:et:al:socs-17}.
However, in almost all cases, the results apply to undirected graphs
only. A notable exception is the paper by Botea et
al. \shortcite{botea:et:al:jair-18}, which shows polynomial-time
decidability for MAPF on directed graphs, provided the graph is
strongly biconnected and there are at least two unoccupied
vertices. 
The general case has been, however, open so far.

In a similar vein, Wu and Grumbach \shortcite{wu:grumbach:dam-10}
generalized the robot movement problem on an undirected graph as
introduced by Papadimitriou et
al. \shortcite{papadimitriou:et:al:focs-94} to directed graphs. The
robot movement problem is the problem of finding a plan to move a
robot from a vertex $s$ to a vertex $t$, whereby mobile obstacles on
vertices can be moved around but are not allowed to collide. Wu and
Grumbach showed that solvability can be decided in polynomial time if
the graph is either acyclic or strongly connected.  In their
conclusion they suggested to study the more difficult problem when all
mobile obstacles are themselves also agents, which again is the MAPF
problem on directed graphs.

We address this open problem by showing that the MAPF problem on
directed graphs, which we will call {\em diMAPF}, is
NP-hard. Interestingly, proving completeness for this problem seems to
be quite non-trivial and we will only provide a loose upper bound for
the general case, a
tight upper bound for the special case of acyclic directed graphs, and a
conditional result.

\section{Notation and Terminology}

A {\em graph} $G$ is a tuple $(V,E)$ with $E \subseteq \{ \{u,v\} \mid
u,v \in V\}$. The elements of $V$ are called {\em vertices} and the
elements of $E$ are called {\em edges}. A {\em directed graph} or {\em
  digraph} $D$ is a tuple $(V,A)$ with $A \subseteq V^2$. The elements
of $V$ are called {\em vertices}, the element of $A$ {\em arcs}. Given
a digraph $D$,  the {\em underlying graph} of $D=(V,A)$, in symbols
${\cal G}(D)$, is the graph resulting from ignoring the direction of
the arcs, i.e., ${\cal G}(D) = (V, \{\{u,v\} \mid (u,v) \in A\})$. We
assume all graphs and digraphs to be {\em simple}, i.e., not containing any
self-loops of the form $\{u\}$, resp. $(u,u)$.

Given a digraph $D=(V,A)$ (or a graph $G=(V,E)$), the digraph
$D'=(V',A')$ (resp. graph $G'=(V',E')$) is called {\em sub-digraph} of
$D$ (resp. {\em sub-graph} of $G$)) if $V \supseteq V'$ and $A
\supseteq A'$ (resp. $E \supseteq E'$).  Let $D=(V,A)$ again be a
directed graph (or $G=(V,E)$ a graph) and let $X \subseteq V$. Then by
$D-X$ (resp. $G-X$) we refer to the sub-digraph $(V-X, A-(X\times
V)-(V\times X))$ (resp. $(V-X, E-\{\{u,v\} \mid  u \in X \vee v
\in X\})$).

A {\em path} in a digraph $D=(V,A)$ (or a graph $G=(V,E)$) is a
non-empty sequence of vertices and arcs (resp. edges) of the form
$v_0,e_1,v_1,\ldots,e_k,v_k$ such that $v_i \in V$, for all $0 \leq i
\leq k$, $v_i \neq v_j$ for all $0 \leq i < j \leq k$, $e_j \in A$
(resp. $e_j \in E$) for all $1 \leq j \leq k$, and $(v_{j-1}, v_j) =
e_j$ for all $1 \leq j \leq k$. A {\em cycle} in a digraph $D=(V,A)$
(or a graph $G=(V,E)$) is a non-empty sequence of vertices $v_0,v_1,\ldots,v_k$ such that $v_0 = v_k$, $(v_i,v_{i+1}) \in A$ (resp. $\{v_i,v_{i+1}\} \in E$) for all $0 \leq i < k$ and $v_i \neq v_j$ for all $0 \leq i < j < k$. If a digraph does not contain any cycle, it is called {\em directed acyclic graph (DAG)}.  

A graph $G=(V,E)$ is said to be {\em connected} if there is a path
between each pair of vertices. It is {\em biconnected} if
$G-\{v\}$ is connected for each $v \in V$. Similarly, a digraph $D=(V,A)$
is {\em weakly connected}, if the underlying graph ${\cal G}(D)$ is
connected. It is {\em strongly connected} if for every pair of
vertices $u,v$, there is a path in $D$ from $u$ to $v$ and
one from $v$ to $u$. The smallest strongly connected digraph is the
one with
one vertex and no arcs.
A digraph is called {\em strongly biconnected}
if it is strongly connected and the underlying graph ${\cal G}(D)$ is
biconnected.

The {\em strongly connected components} of a digraph $D=(V,A)$ are
the maximal sub-digraphs $D_i=(V_i,A_i)$ that are strongly connected.  
The {\em condensation} of a digraph $D$ is the digraph consisting of
its strongly connected components $D_i$: $C(D) = (\{D_i\}, \{(D_i,D_j)\mid (u,v) \in A, u \in
V_i, v \in V_j, D_i \neq D_j\})$. Note that $C(D)$ is a DAG.

A {\em multi-agent pathfinding (MAPF) instance} is given by a graph $G
=(V,E)$, a set of {\em agents} $R$ with $|R| \leq |V|$, an {\em
  initial state} that is an injective function $s: R \rightarrow V$,
and a {\em goal state} that is another injective function $t: R
\rightarrow V$. The vertex $t(r)$ is called {\em destination} of agent
$r$.  Given a {\em state} $s$, one possible {\em successor state} $s'$
is the function such that one agent $r$ {\em moves} from one vertex to
an adjacent vertex: If $s(r) = u, \{u,v\} \in E$ and there is no $r'
\in R$ such that $s(r') = v$, then the successor state $s'$ is
identical to $s$ except at the point $r$, where $s'(r) = v$. The MAPF
problem is then to decide whether there exists a sequence of moves
that transforms $s$ into $t$.

Often the MAPF problem is defined in terms of parallel movements
\cite{ryan:jair-08,surynek:aaai-10}, where one step consists of
parallel move and {\em wait} actions of all agents. However, as long
as we are interested only in solution existence, there is no
difference between the MAPF problems with parallel and sequential
movements.  If we allow for {\em simultaneous cyclic rotations}
\cite{standley:korf:ijcai-11,yu:lavalle:aaai-13}, where one assumes
that all agents in a fully occupied cycle can move simultaneously,
things are a bit different. For the hardness proof latter on
such movements are irrelevant, though.

{\em Multi-agent pathfinding on directed graphs (diMAPF)} is similar
to MAPF, except that we have a directed graph and the moves have to
follow the direction of an arc, i.e., if there is an arc $(u,v) \in A$
but $(v,u) \not\in A$, then an agent can move from $u$ to $v$ but not
vice versa.

We assume that the reader is familiar with basic notions from
computational complexity theory \cite{papadimitriou:book-94}.

\section{A Lower Bound for {\em diMAPF}}

As mentioned above, Kornhauser et al. \shortcite{kornhauser:et:al:focs-84}
have shown that deciding MAPF (on undirected graphs) is a
polynomial-time problem and that movement
plans have only cubic length in the number of vertices. Botea et
al. \shortcite{botea:et:al:jair-18} have shown that deciding
solvability of diMAPF is again a polynomial-time problem and plans
have cubic length, provided the digraph is a strongly biconnected
digraph and there are at least two empty vertices. One intuitive
reason for these positive results are that on undirected graphs and strongly
biconnected digraphs one can usually restore earlier
sub-configurations. This means that agents can move out of the way and
then back to where they were earlier. In a digraph without strong
connectivity, moves are not necessarily reversible and an agent might
paint itself into a corner. Given that in every state there are
different possible moves for one agent, it might be hard to decide
which is the one that in the end will not block another agent in the
future. As a matter of fact, this is the case in the reduction from
3SAT that we use in the proof of the following theorem.

\begin{theorem}
\label{T:np-hardness}
  The diMAPF problem is NP-hard, even when simultaneous cyclic
  rotations are allowed.
\end{theorem}

\begin{proof}
  We prove NP-hardness by a reduction from the 3SAT problem, the problem of deciding satisfiability for a formula in conjunctive normal form with 3 literals in each clause. Let us assume a 3SAT instance, consisting of $n$ variables $x_i$
 and $k$ clauses $c_j$ with 3 literals each. 

Now we construct a diMAPF instance as follows.\footnote{This reduction
uses inspirations from a reduction that has been used to show
PSPACE-hardness 
for a generalized version of MAPF \cite{nebel:et:al:jair-19}.} The set of agents is:
\[R = \{x_1, \ldots, x_n, x'_1, \ldots, x'_n, c_1, \ldots, c_k, f_1,
\ldots, f_{nk} \}.\]
The $x_i$'s are called {\em variable agents}, the $x'_i$'s are named
{\em shadow agents}, the $c_j$'s are called {\em clause agents}, and the $f_{\ell}$'s are
called {\em filler agents}.
 The set of vertices of the digraph is  constructed as follows:
$$V = \{ v_1, \ldots, v_{nk+n+k}\} \cup \bigcup_{i=1}^{n} \{v_i^T,
v_i^F,  v_{x_i}, v_{x'_i} \} \cup \bigcup_{j=1}^k \{v_{c_j}\}.$$ We proceed by constructing three gadgets, which we call {\em 
sequencer}, {\em clause evaluator}, and
{\em collector}, respectively. We illustrate the construction
using the example in Figure~\ref{F:reduction}. In this visualization,
vertices occupied by an agent are shown as squares containing the name
of the
occupying agent. Black circles symbolize empty vertices. Each vertex
is labelled by its identifier, perhaps followed by a colon and the name of an agent in order to symbolize the destination for this agent. For example, $v_1$ is the destination for agent $f_1$. 

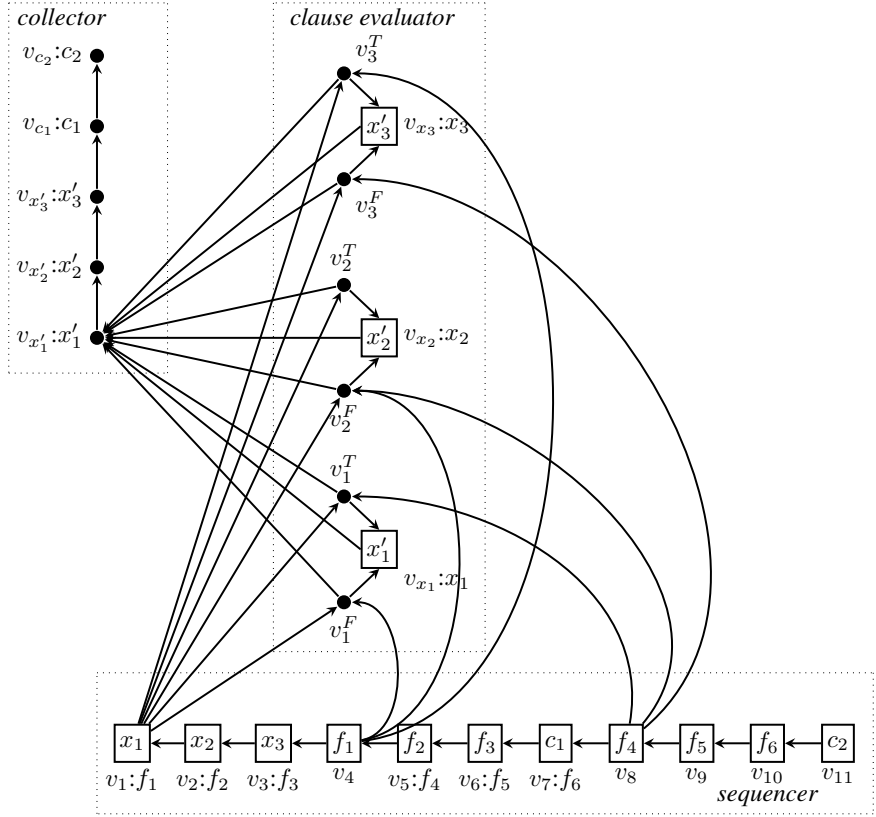
\begin{figure}[htb]
\begin{center}
\resizebox{0.95\columnwidth}{!}{\begin{tikzpicture}[inner
    sep=2pt,auto,>=stealth,thick,
  node_s/.style={circle, fill},
  agent_s/.style={rectangle,minimum height=15pt,fill=white,draw=black}]
\node[agent_s, label=below:{$v_1$:$f_1$}] (v1) at (1,0) {$x_1$};
\node[agent_s, label=below:{$v_2$:$f_2$}] (v2) at (2,0) {$x_2$};
\node[agent_s, label=below:{$v_3$:$f_3$}] (v3) at (3,0) {$x_3$};
\node[agent_s, label=below:{$v_4$}] (v4) at (4,0) {$f_1$};
\node[agent_s, label=below:{$v_5$:$f_4$}] (v5) at (5,0) {$f_2$};
\node[agent_s, label=below:{$v_6$:$f_5$}] (v6) at (6,0) {$f_3$};
\node[agent_s, label=below:{$v_7$:$f_6$}] (v7) at (7,0) {$c_1$};
\node[agent_s, label=below:{$v_8$}] (v8) at (8,0) {$f_4$};
\node[agent_s, label=below:{$v_9$}] (v9) at (9,0) {$f_5$};
\node[agent_s, label=below:{$v_{10}$}] (v10) at (10,0) {$f_6$};
\node[agent_s, label=below:{$v_{11}$}] (v11) at (11,0) {$c_2$};
\draw (v1) edge[<-] (v2);
\draw (v2)  edge[<-] (v3);
\draw (v3) edge[<-] (v4);
\draw (v4) edge[<-] (v5);
\draw (v5)  edge[<-] (v6);
\draw (v6)  edge[<-] (v7); 
\draw (v7) edge[<-] (v8);
\draw (v8)  edge[<-] (v9);
\draw (v9) edge[<-] (v10);
\draw (v10)  edge[<-] (v11) ;
\node[node_s, label=below:{$v^F_{1}$}] (v1t) at (4,2){};
\node[node_s, label=above:{$v^T_{1}$}] (v1f) at (4,3.5){};
\node[agent_s, label=below right:{$v_{x_1}$:$x_1$}](ve1) at (4.5,2.75){$x'_1$};
\draw (ve1.south) edge[<-] (v1t);
\draw (ve1.north) edge[<-] (v1f);

\node[node_s, label=below:{$v^F_{2}$}] (v2t) at (4,5){};
\node[node_s, label=above:{$v^T_{2}$}] (v2f) at (4,6.5){};

\node[agent_s, label=right:{$v_{x_2}$:$x_2$}](ve2) at (4.5,5.75){$x'_2$};
\draw (ve2.south) edge[<-] (v2t);
\draw (ve2.north) edge[<-] (v2f);

\node[node_s, label=below right:{$v^F_{3}$}] (v3t) at (4,8){};
\node[node_s, label=above right:{$v^T_{3}$}] (v3f) at (4,9.5){};

\node[agent_s, label=right:{$v_{x_3}$:$x_3$}](ve3) at (4.5,8.75){$x'_3$};
\draw (ve3.south) edge[<-] (v3t);
\draw (ve3.north) edge[<-] (v3f);

\draw(v1) edge[->] (v1t); 
\draw(v1) edge[->] (v1f); 
\draw(v1) edge[->] (v2t); 
\draw(v1) edge[->] (v2f); 
\draw(v1) edge[->] (v3t); 
\draw(v1) edge[->] (v3f); 

\node[node_s, label=left:{$v_{x'_1}$:$x'_1$}] (ve1p) at (0.5,5.75){};
\node[node_s, label=left:{$v_{x'_2}$:$x'_2$}] (ve2p) at (0.5,6.75){};
\node[node_s, label=left:{$v_{x'_3}$:$x'_3$}] (ve3p) at (0.5,7.75){};
\node[node_s, label=left:{$v_{c_1}$:$c_1$}] (vc1) at (0.5,8.75){};
\node[node_s, label=left:{$v_{c_2}$:$c_2$}] (vc2) at (0.5,9.75){};
\draw (ve1p) edge[->] (ve2p);
\draw (ve2p) edge[->] (ve3p);
\draw (ve3p) edge[->] (vc1);
\draw  (vc1) edge[->]  (vc2);
\draw (ve1p) edge[<-] (v3f);
\draw (ve1p) edge[<-]  (v3t);
\draw (ve1p) edge[<-]  (v2f);
\draw (ve1p) edge[<-]  (v2t);
\draw (ve1p) edge[<-]  (ve1.west);
\draw (ve1p) edge[<-]  (ve2);
\draw (ve1p) edge[<-]  (ve3.west);
\draw (ve1p) edge[<-]  (v1f);
\draw (ve1p) edge[<-]  (v1t);

\draw (v4) edge[out=10, in=360, ->]  (v1t);
\draw (v4)  edge [out=10, in=360, ->]  (v2t);
\draw (v4) edge[out=7, in=360, ->] (v3f);

\draw(v8) edge[out=80, in=360, ->] (v1f);
\draw(v8) edge[out=50, in=360, ->] (v2t);
\draw(v8) edge[out=40, in=360, ->] (v3t);

\draw(0.5,-1) edge[thin,dotted] (0.5,1);
\draw(0.5,1) edge[thin,dotted] (11.5,1);
\draw(11.5,1) edge[thin,dotted] (11.5,-1);
\draw(11.5,-1) edge[thin,dotted] (0.5,-1);
\node at (10,-0.78) {\em sequencer};

\draw(3,1.3)  edge[thin,dotted] (6,1.3);
\draw(6,1.3)  edge[thin,dotted] (6,10.5);
\draw(6,10.5)  edge[thin,dotted] (3,10.5);
\draw(3,10.5)  edge[thin,dotted] (3,1.3);
\node at (4.4,10.3) {\em clause evaluator};

\draw(-0.75,10.5)    edge[thin,dotted] (1.5,10.5);
\draw(-0.75,10.5)    edge[thin,dotted] (-0.75,5.25);
\draw(-0.75,5.25)   edge[thin,dotted] (1.5,5.25);
\draw(1.5,5.25)  edge[thin,dotted] (1.5,10.5);
\node at (0.0,10.3) {\em collector};
\end{tikzpicture}}
\end{center}
\caption{Example for $(x_1 \vee x_2 \vee \neg x_3) \wedge (\neg x_1 \vee x_2 \vee  x_3)$}
\label{F:reduction}
\end{figure}

The task of the {\em sequencer} is to enforce first the
sequence of truth-value choices of the variable agents $x_i$. 
{Each of the variable agents $x_i$ has to
  go to one of the vertices $v_i^T$ or  $v_i^F$---and these are the
  only vertices $x_i$ can go to. After that the filler and clause
  agents can move to the left and the clause agents can start to go
  through the clause evaluator.
The {\em clause
    evaluator} is created in a way so that a clause agent $c_j$ can move
  through it from right to left, provided one of the literals of the
  corresponding clause is true according to the truth-value choices
  made by the variable agents. Finally, the {\em collector} contains the
  destination vertices for all clause agents $c_j$ and for the {\em 
  shadow agents} $x'_i$. First the clause agents $c_j$ need to get to their destinations, then the shadow agents $x'_i$ can arrive at their goals, making room for the variable agents $x_i$ to move to their final destinations.  

The {\em sequencer} consists of a sub-graph with $nk+n+k$ vertices,
which are named $v_1$ to $v_{nk+n+k}$. These vertices are connected
linearly, i.e., there is an arc from $v_{i+1}$ to $v_{i}$.  The
vertices $v_1$ to ${v_n}$ are occupied by {\em variable agents} named
$x_1$ to ${x_n}$. In addition we have {\em clause agents} $c_j, 1 \leq
j \leq m$ on the vertices $v_{n+j(n+1)}$, respectively. The rest of
the vertices are filled with {\em filler agents} $f_p$ for all the not
yet occupied vertices.  The destination for each filler agent $f_p$ is
the vertex with an index $n$ lower than the one $f_p$ is starting
from. {These filler agents are necessary to enforce that the clause
  agents enter the clause evaluator only after the variable agents
  have made their choices.}

The {\em clause evaluator} contains for each variable $x_i$ one pair
of vertices: $v^F_{i}$ and $v^T_{i}$. These vertices represent the
truth assignment choices {false and true}, respectively, for $x_i$. In
addition, there exists an additional vertex $v_{x_i}$, which can be
reached from both $v^F_{i}$ and $v^T_{i}$} and which is the
destination for agent $x_i$ and initially occupied by the {{\em shadow
    agent}} $x'_i$. This setup enforces the variable agent $x_i$ to move to
$v^F_{i}$ or $v^T_{i}$ once it has reached $v_1$ waiting for the
shadow agent  $x'_i$ to move towards its destination.

Once all the $x_i$ agents have reached their vertices $v_i^T$ or
$v_i^F$, the remaining agents in the sequencer can move {$n$ vertices}
to the left, i.e., from {$v_p$} to {$v_{p-n}$} bringing all the filler
agents {$f_p$} to their respective destinations. Further, all clause
agents $c_j$ have to go from {$v_{n+j(n+1)}$} to {$v_{j(n+1)}$},
whereby these latter vertices are connected to the clause evaluator in
the following way. The vertex {$v_{j(n+1)}$}, which will hold clause
agent $c_j$ after all agents moved $n$ steps to the {left}, is
connected to $v^F_{i}$ iff the clause $c_j$ contains $x_i$ positively
and it is connected to $v^T_{i}$ iff $c_j$ contains $x_i$
negated. This means that the clause agent $c_j$ can pass to $v_{x'_1}$
if and only if one of the variable agents $x_i$ participating in the
clause $c_j$ made the ``right'' choice.

Finally, the collector gadget provides the destinations for all the
clause agents $c_j$ and the {shadow} agents $x'_i$. The vertices
$v_i^T$, $v_i^F$, and $v_{x_i}$ all lead to the vertex $v_{x'_{1}}$,
which is the destination of the shadow agent $x'_1$. Starting at this
node, we have a linearly connected path up to vertex $v_{x'_n}$ from
which $v_{c_1}$ can be reached, which in turn is a linear path to
$v_{c_k}$. This implies that first all clause agents $c_j$ have to
reach their destination vertices, after which the shadow agents $x'_i$
can move to their destinations. Only after all this has happened, the
variable agents can move to their destinations $v_{x_i}$.

By the construction, a successful movement plan will contain the
following phases:
\begin{enumerate}
\item In the first phase the variable agents $x_i$ will move to the
  vertices $v^T_i$ or $v^F_i$. Which vertex $x_i$ moves to can be
  interpreted as making a choice on the truth value of the variable.
Note that no other vertices are
  possible, because then the final destination would not be reachable
  any more for $x_i$. 
\item In the second phase, all filler and clause agents move $n$
  vertices to the left in the sequencer widget. Note that no other vertices
  are possible for filler agents because then their goal would not be
  reachable any more. 
\item After phase 2 has finished, all clause agents $c_j$ occupy
  vertices $v_{j(n+1)}$, from which they can pass through the clause
  evaluator widget. By construction, they can pass through it if and
  only if for one of the variables occurring in clause $c_j$, the
  variable agent has made a choice in phase 1 corresponding to making
  the clause true. Note that no other group of agents can move, or
  otherwise they will no longer be able to reach their destination or block
  the clause agents. The phase ends when all clause agents have
  reached their destinations.
\item After the end of phase 3, the shadow agents $x'_i$ move to their
  respective destinations, enabling the variable agents $x_i$ to go to their
  destinations. 
\item Finally all variable agents can move to their
  destinations, finalizing the movement plan.
\end{enumerate}
Note that in a successful plan some of the phases could
overlap. However, one could easily disentangle them. The
critical phases are apparently phase 1 and phase 3. Phase 3 is only
successful if in phase 1 the variable agents made the choices in a
way, so that all clauses are satisfied. In other words, the existence of a
successful movement plan implies that there is a satisfying truth
value assignment to the CNF formula. Conversely, if there exists a
satisfying truth value assignment, then this could be used to generate
a successful movement plan by using it to make the choices in phase
1. Since the construction is clearly polynomial in the size of the 3SAT
instance, it is a polynomial many-one reduction, proving that diMAPF
is NP-hard.

Finally note that the constructed graph is a DAG, i.e., simultaneous
cyclic rotations are impossible. This implies that the problem is
NP-hard even if such movements were allowed.
\end{proof}

\section{Upper Bounds for {\em diMAPF}}

While the result of the previous section demonstrates that diMAPF is
more difficult than MAPF (provided $NP\neq P$), it leaves open how
much more difficulty is introduced by moving from undirected to
directed graphs. Although one might suspect that diMAPF is just
NP-complete, this is by no way obvious. The main obstacle in proving
this is the fact that the state space of the diMAPF problem is
exponential. Nevertheless, it cannot be more complex than the
propositional STRIPS planning problem, which has
a similar state space \cite{bylander:ai-94}. Indeed, the proof below uses
exactly the arguments as Bylander's \shortcite[Theorem 3.1]{bylander:ai-94} PSPACE membership proof.

\begin{proposition}
  The diMAPF problem is in PSPACE.
\end{proposition}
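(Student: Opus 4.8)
The plan is to give a polynomial many-one reduction from diMAPF to the plan-existence problem for propositional STRIPS, which is in PSPACE \cite{bylander:ai-94}. Given a diMAPF instance consisting of a digraph $D=(V,A)$, agent set $R$, and state functions $s,t$, introduce propositional fluents $\mathit{at}_{r,v}$ for every agent $r\in R$ and vertex $v\in V$, together with fluents $\mathit{free}_v$ for every $v\in V$. The initial state makes $\mathit{at}_{r,s(r)}$ true for all $r\in R$ and $\mathit{free}_v$ true exactly for the vertices $v\notin s(R)$, with all other fluents false; the goal is the conjunction of $\mathit{at}_{r,t(r)}$ over all $r\in R$. For each agent $r$ and each arc $(u,v)\in A$ add an action $\mathit{move}_{r,u,v}$ with precondition $\{\mathit{at}_{r,u},\mathit{free}_v\}$ and effect $\{\mathit{at}_{r,v},\mathit{free}_u,\neg\mathit{at}_{r,u},\neg\mathit{free}_v\}$.

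Next I would verify faithfulness. Every state reachable from the initial state encodes an injective placement $R\to V$ whose occupied vertices are those carrying a true $\mathit{at}$-fluent, the remaining ones carrying $\mathit{free}$; the action $\mathit{move}_{r,u,v}$ is applicable precisely when $r$ occupies $u$ and $v$ is empty, and its application realises exactly the diMAPF move of $r$ from $u$ to $v$ along $(u,v)$. Hence move sequences of the diMAPF instance correspond step-for-step to plans of the STRIPS instance, and the STRIPS goal is reachable if and only if $s$ can be transformed into $t$. The number of fluents is $|R|\,|V|+|V|=O(|V|^{2})$ and the number of actions is $|R|\,|A|=O(|V|^{3})$, so the construction is polynomial, and membership of diMAPF in PSPACE follows.

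Alternatively, one can argue directly without invoking STRIPS: a configuration of diMAPF is an injective function $R\to V$, describable in $O(|R|\log|V|)$ bits, and checking whether one configuration is a successor of another takes polynomial time. Deciding reachability of $t$ from $s$ in this implicitly presented graph is therefore in $\mathrm{NPSPACE}$ (guess the moves one at a time, maintaining only the current configuration and a step counter bounded by $|V|!$, which needs $O(|V|\log|V|)$ bits), and $\mathrm{NPSPACE}=\mathrm{PSPACE}$ by Savitch's theorem.

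The step needing the most care is the correspondence argument — in particular checking the invariant that the $\mathit{free}_v$ fluents remain synchronised with the $\mathit{at}$-fluents along every reachable action sequence, so that STRIPS states are in bijection with diMAPF configurations. This invariant is easy, but it is the crux; and it is worth noting explicitly that the worst-case exponential length of diMAPF plans is precisely why this route does not place the problem in $\mathrm{NP}$, whereas it does yield PSPACE.
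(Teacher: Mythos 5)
Your proposal is correct, and its second, ``alternative'' argument is exactly the paper's proof: guess the moves one at a time, keeping only the current configuration (polynomial space), so the problem is in NPSPACE, which equals PSPACE by Savitch's theorem. Your primary argument --- an explicit polynomial reduction to propositional STRIPS plan existence --- is a route the paper only gestures at in the sentence preceding the proposition (``it cannot be more complex than the propositional STRIPS planning problem'') but does not actually carry out; you supply the fluents, operators, and the synchronisation invariant between the $\mathit{at}$- and $\mathit{free}$-fluents that makes the correspondence exact. The STRIPS reduction buys a little more (it situates diMAPF inside a well-studied planning formalism and would transfer any sharper upper bounds known for restricted STRIPS fragments), at the cost of the bookkeeping you rightly flag as the crux; the direct NPSPACE argument is shorter and is all the paper needs. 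Your closing remark --- that exponential worst-case plan length is precisely what blocks the obvious NP upper bound --- matches the paper's own discussion following the proposition.
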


\begin{proof}
  A movement sequence from the initial state to a goal state, if one
  exists, can be generated non-deterministically using for each
  movement only polynomial space (the representation of the two
  states). One can verify that this non-deterministically generated
  sequence is indeed a successful movement sequence by checking during the
  generation process that each movement is legal and that the final
  state is the goal state using only polynomial space. In other words,
  the problem is in NPSPACE, which is identical to PSPACE \cite{savitch:jcss-70}.
\end{proof}

However, it is by no means obvious that one has to go through a significant part of the state space in order to arrive at the goal configuration, if this is possible at all. 
In particular, in cases similar to the one used in the proof of Theorem~\ref{T:np-hardness}, it seems obvious that the number of moves is bounded polynomially. 

\begin{proposition}
  The diMAPF problem on DAGs is NP-complete.
\end{proposition}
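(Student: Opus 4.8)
The plan is to establish the two directions separately: NP-hardness and membership in NP. For NP-hardness, I would first check whether the reduction used in the proof of Theorem~\ref{T:np-hardness} already produces a DAG. Looking at the construction, the sequencer is a directed path (with arcs all pointing left), the clause evaluator consists of arcs from the sequencer vertices into the $v_i^T/v_i^F$ vertices and then into the $v_{x_i}$ vertices, and the collector is again a directed path; the only arcs leaving the clause-evaluator region go forward into the collector. Since no arc ever points ``back'' toward the sequencer, the whole digraph is acyclic. Hence the very same many-one reduction from 3SAT witnesses NP-hardness of diMAPF restricted to DAGs, and no new construction is needed --- I would simply remark that the digraph built in Theorem~\ref{T:np-hardness} is a DAG.

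For membership in NP, the key observation is that on a DAG the behaviour of each agent is severely constrained: an agent can only traverse arcs in the topological order, so once it leaves a vertex it can never return to it. Consequently each agent makes at most $|V|-1$ moves in any plan, and a solution plan (if one exists) has length at most $|R|\cdot(|V|-1)$, which is polynomial in the size of the instance. The nondeterministic algorithm then guesses a sequence of moves of this bounded length and verifies in polynomial time that each move is legal (follows an arc, the target vertex is unoccupied) and that the final state equals $t$. This puts diMAPF on DAGs in NP.

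The step I expect to require the most care is the polynomial bound on plan length, specifically the claim that an agent never revisits a vertex. This follows because if an agent occupies $u$ at some time and later occupies $u$ again, the intermediate moves of that agent trace a directed walk from $u$ back to $u$ in the DAG, which would contain a directed cycle --- contradicting acyclicity. (A subtle point: the walk of a single agent through the digraph need not be a simple path as a sequence, but any repetition of a vertex within it immediately yields a closed directed walk and hence a cycle, so in fact each agent's trajectory is a simple directed path.) Once this is nailed down, bounding the total number of moves by $|R|(|V|-1)$ and hence the plan length is immediate, and the rest of the NP argument is routine guess-and-check. Combining NP-hardness with NP-membership gives NP-completeness.
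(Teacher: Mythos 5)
Your proof is correct and follows essentially the same route as the paper: on a DAG no agent can ever revisit a vertex, so every solution has polynomially bounded length and membership in NP follows by guess-and-check, while hardness comes from the reduction of Theorem~\ref{T:np-hardness}. You are in fact slightly more careful than the paper, which simply cites Theorem~\ref{T:np-hardness} without explicitly verifying that the constructed digraph is acyclic --- a check your argument supplies.
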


\begin{proof}
In a DAG, each agent can make at most $|V|$ moves, since the agent can
never visit a vertex twice. This means that overall no more than
$|V|^2$ moves are
possible. This implies that all solutions have a length bounded by a
polynomial in the input size, implying that the problem is in NP. 
Together with Theorem~\ref{T:np-hardness}, this implies the claim.
\end{proof}

When looking at what stops us from proving a general NP-completeness
result, we notice that strongly connected components are the
culprits. They allow agents to reach the same location twice with the
other agents in a perhaps different configuration. This may imply that
a particular configuration can only be reached when agents walk
through exponentially many distinct configurations. We know from Botea
et al. \shortcite{botea:et:al:jair-18} that for all strongly
biconnected digraphs with at least two empty vertices, all
configurations can be reached using only cubic many moves. If we allow
for only one empty vertex, solution existence cannot be  any longer
guaranteed \cite{botea:et:al:jair-18} and it is not any longer clear
whether a polynomial long sequence suffices, if the instance is
solvable at all. If we further weaken the requirement to only strongly
connected graphs, it is neither clear whether solvability can be
decided in polynomial time nor whether movement sequences can be
bounded polynomially, although the latter sounds very plausible. For
this reason, we will assume it for now and call it the {\em short
  solution hypothesis for strongly connected digraphs}: ``For each
solvable diMAPF instance on strongly connected digraphs, there exists
a movement plan of polynomial length.''

\begin{theorem}
  If the short solution hypothesis for strongly connected digraphs is true, then diMAPF is NP-complete.
\end{theorem}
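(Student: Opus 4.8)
The plan is to prove NP-hardness and membership in NP separately: hardness is already given unconditionally by Theorem~\ref{T:np-hardness}, so everything reduces to showing that, \emph{assuming} the short solution hypothesis, every solvable instance of diMAPF on a digraph $D=(V,A)$ admits a movement plan of length polynomial in $|V|$. Once this ``short solution property'' is available for arbitrary digraphs, a nondeterministic machine can guess such a plan and verify it move by move in polynomial time, placing diMAPF in NP and hence, with Theorem~\ref{T:np-hardness}, making it NP-complete. So the real content is a length bound, obtained by combining the hypothesis (which handles strongly connected components) with the DAG-style argument already used above (which handles the acyclic ``skeleton'' of the instance).

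The key object is the condensation $C(D)$, which is a DAG. I would fix \emph{any} solution plan $\pi$ (one exists by assumption, possibly of exponential length) and classify each move in $\pi$ as \emph{intra-component}, when its source and target lie in the same strongly connected component $D_i$, or \emph{inter-component}, when they lie in distinct components. Since $C(D)$ is acyclic, the sequence of components visited by a single agent is strictly increasing in a fixed topological order of $C(D)$, so it has length at most the number $m\le|V|$ of components; hence each agent makes at most $m-1$ inter-component moves, and $\pi$ contains fewer than $|R|\cdot(m-1)<|V|^2$ inter-component moves \emph{irrespective of the length of $\pi$}. These inter-component moves cut $\pi$ into at most $|V|^2+1$ maximal blocks consisting purely of intra-component moves.

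Inside one such block the assignment of agents to components is constant (no inter-component move occurs), so no agent outside $D_i$ ever sits on a vertex of $D_i$, and the legality of an intra-$D_i$ move depends only on the sub-configuration restricted to $D_i$. Consequently the moves of a block acting on different components are mutually independent and may be reordered so that, component by component, we obtain a legal sequence of intra-$D_i$ moves transforming the $D_i$-sub-configuration at the start of the block into the one at its end. Each such sequence witnesses solvability of a diMAPF instance on the strongly connected digraph $D_i$ (with the start- and end-of-block sub-configurations as initial and goal state functions), so by the short solution hypothesis it can be replaced by a sequence of length polynomial in $|V_i|\le|V|$. Summing the polynomial bound over the at most $m\le|V|$ components occurring in a block, then over the at most $|V|^2+1$ blocks, and adding the fewer than $|V|^2$ inter-component moves, the rewritten plan has length polynomial in $|V|$, which is what we needed.

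The step I expect to require the most care is the decoupling argument of the third paragraph: one must check that replacing a block by shorter per-component sequences leaves the global configuration immediately before and after the block unchanged, so that the inter-component move following the block remains legal and the surgery composes into a globally valid plan; and one must verify that the object handed to the short solution hypothesis really is a legitimate diMAPF instance on a strongly connected digraph --- in particular that the restricted initial and goal state functions are still injective and that the number of agents present in $D_i$ is at most $|V_i|$. The remaining ingredients --- the counting of inter-component moves via acyclicity of $C(D)$ and the aggregation of the polynomial bounds --- are routine.
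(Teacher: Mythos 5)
Your proposal is correct and follows essentially the same route as the paper's own proof: both bound the number of inter-component events via the acyclicity of the condensation $C(D)$ (yielding $O(|V|^2)$ such events) and then invoke the short solution hypothesis to replace each intra-component segment between consecutive events by a polynomial-length plan. Your write-up is somewhat more explicit about the decoupling of components within a block and about why the restricted sub-instances are legitimate diMAPF instances on strongly connected digraphs, but the decomposition and the key lemma are the same.
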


\begin{proof}
  NP-hardness follows from Theorem~\ref{T:np-hardness}. 

 Assume a
  diMAPF instance on a digraph $D=(V,A)$ that is solvable, which implies
  that there exists a movement plan $\pi$ for the agents on $D$. This plan
  may be arbitrarily long. Consider
  now each strongly component in isolation and focus on the events
  when an agent enters the component, leaves the component, or moves
  to its final destination in the component without moving
  afterwards. In each component there can only be $2|R| \leq 2|V|$
  such events because the condensation of $D$ is a DAG. Between two
  such events, arbitrarily many movements of agents in this component may
  occur in the original plan $\pi$. However, since we assumed the short solution hypothesis
  to be true, there must also be a plan of polynomial length
  $p(|V|)$. Since there are at most $|V|$ strongly connected
  components, there must a plan with no more than $2|V|^2 \times p(|V|)$
  moves, i.e., a plan of polynomial length. This implies that the problem is
  in NP. 
\end{proof}

\section{Conclusion and Outlook}

We gave a first answer to a long-standing open problem, namely, what the
computational complexity of MAPF on digraphs is. In contrast to
solvability on undirected graphs, which is a polynomial time problem,
solvability on digraphs turns out to be NP-hard in the general case.
While we also provide an NP upper bound for DAGs and a PSPACE upper
bound in general, we were only able to show a conditional upper bound
of NP for the general problem, provided the short solution hypothesis
for strongly connected digraphs is true.

While the result in itself may not have a high relevance for practical
purposes, it still is significant in ruling out the possibility of a
polynomial-time algorithm similar to the one developed by Kornhauser et
al. \shortcite{kornhauser:et:al:focs-84}. Furthermore, the short
solution hypothesis could be taken as a suggestion that the result by
Botea et al. \shortcite{botea:et:al:jair-18} could be strengthened to
general strongly connected digraphs.

\bibliography{dimapf}
\ifarxiv
\bibliographystyle{abbrv}
\else
\bibliographystyle{aaai}
\fi

\end{document}
